\newcommand{\etal}{\textit{et al.}}
\newcommand{\eg}{\textit{e.g.}}
  \providecommand\BibTeX{{%
    \normalfont B\kern-0.5em{\scshape i\kern-0.25em b}\kern-0.8em\TeX}}}
\begin{document}

\title[Unified Normalization]{Unified Normalization for Accelerating and Stabilizing Transformers}

\author{Qiming Yang}
\orcid{0000-0002-9505-1649}
\email{yangqiming5@hikvision.com}
\affiliation{%
  \institution{Hikvision Research Institute}
  \city{Hangzhou}
  \country{China}
}

\author{Kai Zhang}
\orcid{0000-0002-0581-9571}
\email{zhangkai23@hikvision.com}
\affiliation{%
  \institution{Hikvision Research Institute}
  \city{Hangzhou}
  \country{China}
}

\author{Chaoxiang Lan}
\orcid{0000-0002-5403-5781}
\email{lanchaoxiang@hikvision.com}
\affiliation{%
  \institution{Hikvision Research Institute}
  \city{Hangzhou}
  \country{China}
}

\author{Zhi Yang}
\orcid{0000-0002-3593-1425}
\email{yanzhi13@hikvision.com}
\affiliation{%
  \institution{Hikvision Research Institute}
  \city{Hangzhou}
  \country{China}
}

\author{Zheyang Li}
\orcid{0000-0002-0229-8707}
\email{lizheyang@hikvision.com}
\affiliation{%
  \institution{Hikvision Research Institute \& Zhejiang University}
  \city{Hangzhou}
  \country{China}
}

\author{Wenming Tan}
\orcid{0000-0003-1338-4536}
\email{tanwenming@hikvision.com}
\affiliation{%
  \institution{Hikvision Research Institute}
  \city{Hangzhou}
  \country{China}
}

\author{Jun Xiao}
\orcid{0000-0002-6142-9914}
\email{junx@cs.zju.edu.cn}
\affiliation{%
  \institution{Zhejiang University}
  \city{Hangzhou}
  \country{China}
}

\author{Shiliang Pu}
\authornote{Corresponding author.}
\orcid{0000-0001-5269-7821}
\email{pushiliang.hri@hikvision.com}
\affiliation{%
  \institution{Hikvision Research Institute}
  \city{Hangzhou}
  \country{China}
}

\renewcommand{\shortauthors}{Qiming Yang et al.}

\begin{abstract}
  Solid results from Transformers have made them prevailing architectures in various natural language and vision tasks. As a default component in Transformers, Layer Normalization (LN) normalizes activations within each token to boost the robustness. However, LN requires on-the-fly statistics calculation in inference as well as division and square root operations, leading to inefficiency on hardware. What is more, replacing LN with other hardware-efficient normalization schemes (e.g., Batch Normalization) results in inferior performance, even collapse in training. We find that this dilemma is caused by abnormal behaviors of activation statistics, including large fluctuations over iterations and extreme outliers across layers. To tackle these issues, we propose Unified Normalization (UN), which can speed up the inference by being fused with other linear operations and achieve comparable performance on par with LN. UN strives to boost performance by calibrating the activation and gradient statistics with a tailored fluctuation smoothing strategy. Meanwhile, an adaptive outlier filtration strategy is applied to avoid collapse in training whose effectiveness is theoretically proved and experimentally verified in this paper. We demonstrate that UN can be an efficient drop-in alternative to LN by conducting extensive experiments on language and vision tasks. Besides, we evaluate the efficiency of our method on GPU. Transformers equipped with UN enjoy about \textbf{31\%} inference speedup and nearly \textbf{18\%} memory reduction. Code will be released at \url{https://github.com/hikvision-research/Unified-Normalization}.
\end{abstract}

\begin{CCSXML}
<ccs2012>
   <concept>
       <concept_id>10010147.10010178.10010224.10010225</concept_id>
       <concept_desc>Computing methodologies~Computer vision tasks</concept_desc>
       <concept_significance>500</concept_significance>
       </concept>
   <concept>
       <concept_id>10010147.10010178.10010179.10010180</concept_id>
       <concept_desc>Computing methodologies~Machine translation</concept_desc>
       <concept_significance>500</concept_significance>
       </concept>
 </ccs2012>
\end{CCSXML}

\ccsdesc[500]{Computing methodologies~Computer vision tasks}
\ccsdesc[500]{Computing methodologies~Machine translation}

\keywords{neural networks, normalization, transformers}

\maketitle

\section{Introduction}

Transformers~\cite{vaswani2017attention} are initially introduced for Natural Language Processing (NLP) tasks~\cite{devlin2018bert,mehta2020delight}. Since Transformers make few assumptions about the structural bias of input data, these architectures can be universally and flexibly applied in other scenarios, such as multi-modal and speech tasks~\cite{lin2021survey, watanabe2018espnet}. The basic modules of Transformers are stackable multi-head self-attention (MHSA) and feed-forward network (FFN) that enable the capture of long-term dependencies between tokens. In these basic modules, Layer Normalization (LN)~\cite{LayerNorm} is chosen as a default component that releases the training process from heavy dependency on mini-batch samples to handle variable-length input. However, LN requires additional computation and memory overheads during inference because of the on-the-fly statistics, as well as division and square root operations. LN thus is inefficient and hardly meets industrial needs ~\cite{shao2019ssn,yan2020towards}. Nevertheless, replacing LN with Batch Normalization (BN)~\cite{ioffe2015batch} leads to inferior performance in NLP tasks~\cite{shen2020powernorm}. Shen \etal ~\cite{shen2020powernorm} show that the large fluctuations in Transformers within activation statistics result in poor performance. Recently, Transformers are broadly proliferated to Computer Vision (CV) tasks that have led to a series of breakthroughs in image classification, object detection, instance segmentation, etc., also known as Vision Transformers (ViTs)~\cite{carion2020end,dosovitskiy2020image,yuan2021tokens,liu2021swin,dong2021cswin,chen2021visformer,touvron2021going,graham2021levit,yu2021soit}. LN is directly inherited from the original Transformer as an essential component in these ViTs despite the fixed-length inputs. Similar to NLP tasks, significant performance degradation (even collapse in training) will also be triggered by replacing LN with BN in Transformers in CV tasks~\cite{chen2021empirical,shao2021dynamic,yao2021leveraging}. Shao \etal ~\cite{shao2021dynamic} hold the view that BN is harmful to ViTs that result in performance degradation. In previous works~\cite{chen2021empirical, yao2021leveraging}, the authors claim that replacing all LN with BN in ViTs leads to convergence problems. Since the naive replacement for LN leads to inferior performance and instability, the deployment of Transformers still suffers from the on-the-fly statistics computation.

\begin{figure}
  \centering
  \includegraphics[width=0.9\linewidth]{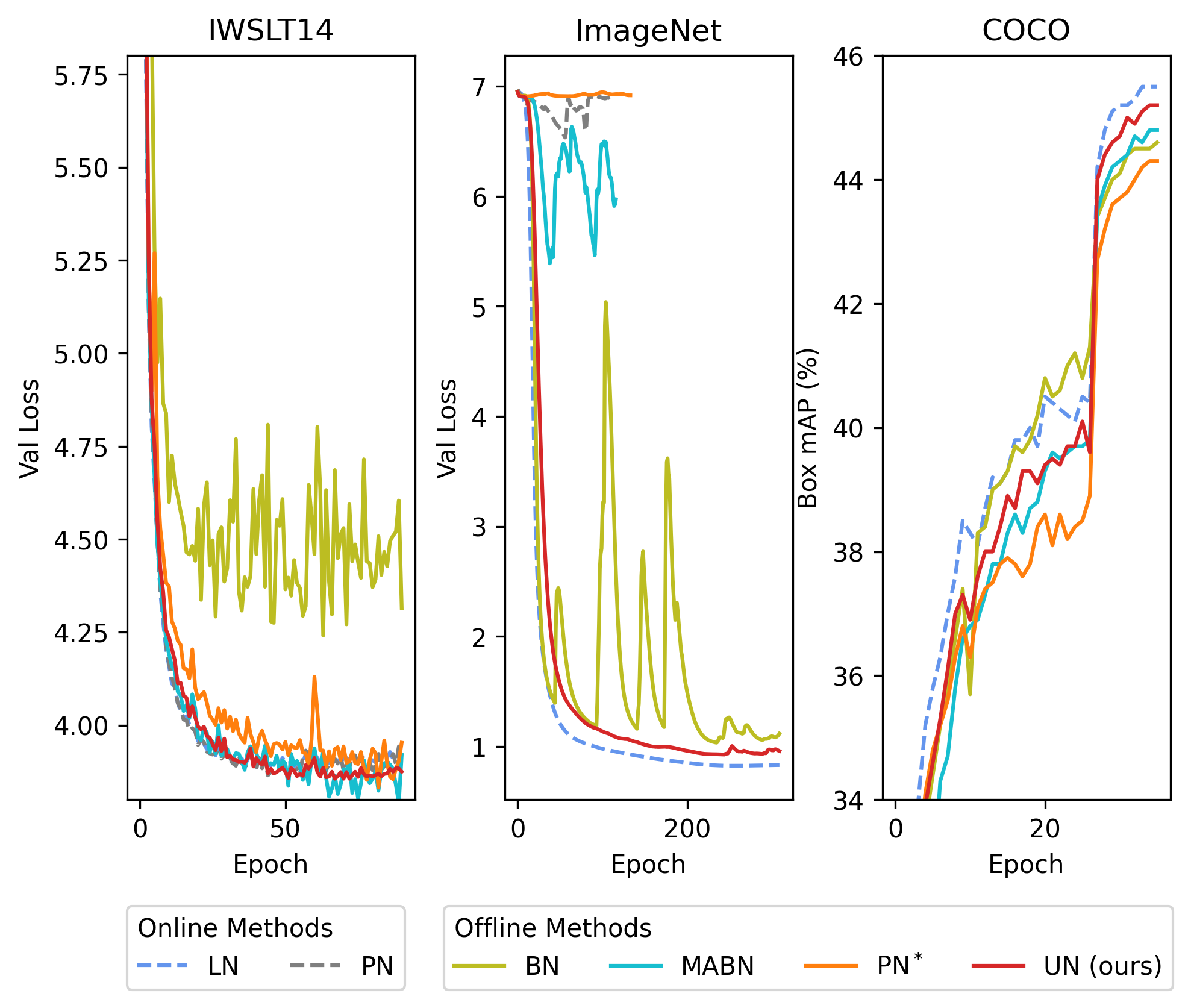}
   \caption{Performance comparison on Transformer (IWSLT14), T2T-ViT-14 (ImageNet), and Swin-T (COCO) during training. The offline methods that could be fused into other linear operations in inference are all plotted with a solid line. PN originally comes with a layer-scale layer, which is removed in PN$^*$.}
  \label{fig:1}
\end{figure}

There are two main methods to improve the hardware efficiency of normalization in Transformers.
1) Simplifying the computation of online statistics~\cite{zhang2019root, shen2020powernorm} and removing inefficient operations (e.g., square root)~\cite{lin2020towards}. Although the performance is maintained, the dynamic calculation for online statistics still exists. 2) Removing the computation for online statistics that utilizes fixed statistics during inference as an offline method, such as BN~\cite{ioffe2015batch}. In this way, inference can be sped up by circumventing the redundant statistics computation getting rid of division and square root operations. As there is no free lunch, significant performance drop and convergence problems are reported in these works~\cite{yao2021leveraging,yan2020towards,shen2020powernorm}. MABN~\cite{yan2020towards} and PN~\cite{shen2020powernorm} utilize moving average strategies to mitigate the fluctuations in Transformers in NLP tasks. However, these mentioned methods are task-specific where the inferior performance and instability still exist in ViTs.

To address the issues above, we dissect the abnormal behaviors of statistics in Transformers. We investigate the activation statistics under moving average strategies in the training of Transformers. We uncover that the fluctuations of the activation statistics are more drastic than that of the gradient statistics during training. Moreover, we find the range of activation statistics task-agnostically keeps increasing along with both the depth and the progress of training, in which the risk of outliers arises. In this sense, extreme outliers are nearly inevitable and continually deteriorate the consistency between activation and gradient statistics. These observations illustrate that the inferior performance and instability (shown in Figure~\ref{fig:1}) are very like boil down to abnormal behaviors of activation statistics in Transformers.

In this paper, we aim to replace LN with an offline method to promote applications for Transformers in language and vision tasks. We propose Unified Normalization (UN) to accelerate inference in Transformers and achieve comparable performance with LN. Specifically, we design a tailored fluctuation smoothing strategy to deal with the fluctuations of different degree in activation and gradient statistics. At the same time, an adaptive outlier filtration strategy is introduced to ensure stable convergence, where the impact of outliers can be proved to be significantly reduced both in theory and experiments. Extensive experiments demonstrate the effectiveness of UN. In a nutshell, our contributions are as follows:
\begin{itemize}
    \item We analyze the abnormal behaviors of activation statistics in Transformers and find the large fluctuations and extreme outliers are responsible for inferior performance and instability.
    
    \item A tailored fluctuation smoothing strategy is designed to calibrate the activation and gradient statistics and boost the performance.
    
    \item An adaptive outlier filtration strategy is introduced to reduce the impact of extreme outliers on the basis of theoretical analysis. 
    
    \item Extensive evaluations in neural machine translation, image classification, object detection, and instance segmentation illustrate the superiority of our method, which is capable of being a drop-in alternative to LN in Transformers. Furthermore, we show that Transformers equipped with UN gain nearly \textbf{18\%} memory reduction and over \textbf{31\%} speedup in inference on GPU.
\end{itemize}

\section{Related Work}

\subsection{Transformers and Vision Transformers}
Transformers~\cite{vaswani2017attention} initially show surprising capability in sequence modeling and neural machine translation. Owing to the high flexibility, Transformers~\cite{devlin2018bert, brown2020language, gulati2020conformer,yao2021wenet} have become the most recent dominant architectures over various NLP tasks and speech tasks. In 2020, Carion \etal~\cite{carion2020end} propose the first end-to-end Transformer-based detector DETR. Later, ViT~\cite{dosovitskiy2020image} is proposed as the very first pure Transformer in CV tasks. The following years have witnessed explosive development of Transformers in CV tasks. T2T~\cite{yuan2021tokens} introduces token-to-token module that combines adjacent tokens in early stage to model local information. Swin~\cite{liu2021swin} and Swin V2~\cite{liu2021swinv2} have applied window-based attention to reduce the overhead of computation in MHSA and achieve state-of-the-art performance. Most recently, Ding \etal~\cite{ding2022davit} explore attention in both spatial and channel tokens to propose a powerful backbone DaViT for vision tasks. In these aforementioned Transformers and Vision Transformers, LN is the preferred choice for normalization. 

In addition, some works~\cite{chen2021visformer,huang2021shuffle,graham2021levit} originally build Transformers with BN, whilst these works need elaborate design on convolutional operations to stabilize training that have considerably modified the original ViT~\cite{dosovitskiy2020image}.

\subsection{Normalization Methods}

\begin{figure}
  \centering
  \includegraphics[width=\linewidth]{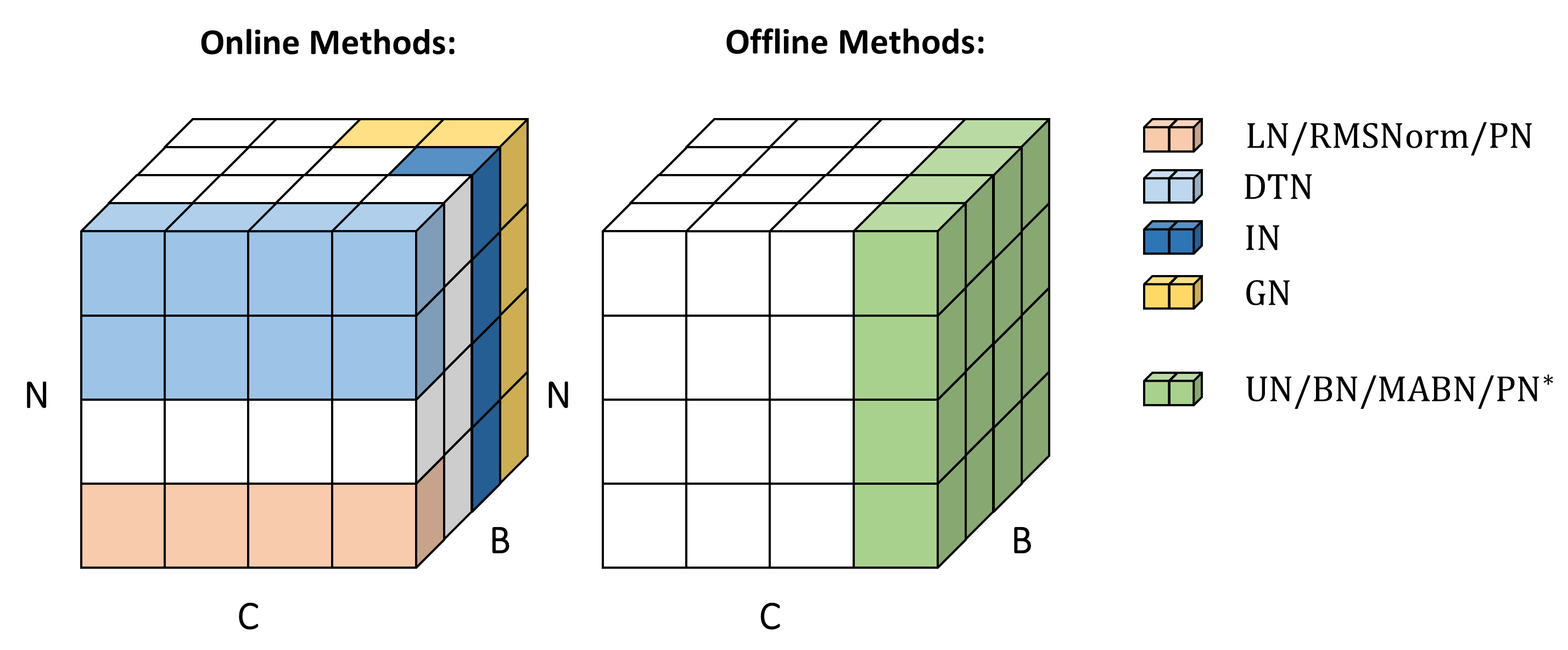}
   \caption{Normalization methods. Each subfigure shows a feature map tensor, where B is the batch axis, N is the number of tokens (or the sequence length) axis, and C is the channel (also known as the embedding size) axis.}
  \label{fig:norm-methods}
\end{figure}

Normalization is widely used for stabilizing training and boosting performance in deep neural networks~\cite{huang2020normalization}. As illustrated in Figure~\ref{fig:norm-methods}, related normalization methods could be categorized into online methods and offline methods according to whether the inference statistics can be fused or not.

\subsubsection{Online Methods}
Online methods require the calculation of on-the-fly statistics during training as well as inference. IN~\cite{ulyanov2016instance}, GN~\cite{wu2018group}, and LN~\cite{LayerNorm} are representative online methods that calculate statistics in different dimensions as shown in Figure~\ref{fig:norm-methods}. Switchable Normalization~\cite{luo2018differentiable} learns to switch between different types of normalization by learning their importance weights. It is widely believed that LN is customized for variable-length NLP samples~\cite{shen2020powernorm}. With the rise of Vision Transformers (ViTs) ~\cite{dosovitskiy2020image,yuan2021tokens,wang2021pyramid,liu2021swin}, LN has also become a preferred choice for CV tasks. Lately, DTN~\cite{shao2021dynamic} exploits the connection within adjacent tokens to improve the performance of LN in ViTs. To make LN more hardware-efficient, some works ~\cite{lin2020towards,xu2019understanding,zhang2019root} attempt to reduce the cost of computation in LN. Zhang \etal~\cite{zhang2019root} propose a simpler method RMSNorm that scales inputs by the root mean square. However, the inefficient dynamic calculation for online statistics is still not fundamentally removed.

\subsubsection{Offline Methods}
Offline methods use estimated inference statistics that could be frozen for arbitrary inputs. Only a point-wise add and multiplication are required during inference that enables the fusion of offline methods with adjacent linear operations. In this way, offline methods can be removed entirely from models and achieve efficient inference~\cite{yan2020towards,yao2021leveraging}. 
However, once these methods cooperate with Transformers, large fluctuations over iterations will lead to performance degradation and even collapse in training~\cite{shen2020powernorm,shao2021dynamic,chen2021empirical,yao2021leveraging}.  Yao \etal~\cite{yao2021leveraging} find Transformers trained with BN are very unstable and crash irregularly. Chen \etal ~\cite{chen2021empirical} attempt to partially replace LN with BN in FFN for stabilizing the training of Transformers. To mitigate the impact of large fluctuations, MABN~\cite{yan2020towards} leverages exponential moving average statistics in activation statistics, and accordingly uses simple moving average statistics in gradient statistics to estimate gradients. Similarly, Shen \etal ~\cite{shen2020powernorm} propose PN$^*$ that uses exponential moving average statistics in both activation and gradient statistics. Preceding works aim to improve the efficiency of LN in Transformers but still suffer from inferior performance and instability. Thus, it is valuable for the community to design a more effective and robust method.

\section{Method}
\label{sec:method}
In this section, we describe the design process of Unified Normalization (UN). First, we develop a unified framework for leveraging offline methods. Based on the framework, we next apply a tailored fluctuation smoothing strategy to mitigate the fluctuations and an adaptive outlier filtration strategy for stabilizing training.

\subsection{Unified Framework}

We develop a unified framework for applying offline methods in Transformers. In this pipeline, the inference statistics are fixed so that they could be fused with other linear operations for speedup.

For a normalization layer, let $\mathbf{X} \in \mathbb{R}^{B\times C}$ and $\mathbf{Y}\in \mathbb{R}^{B\times C}$ denote the input and output, where $B$ is the batch size and $C$ indicates the number of channels. Note that the number of tokens $N$, which could be squeezed into $B$, is omitted in this section for clarity. For arbitrary input in inference, all offline methods perform in a unified manner
\begin{equation}
  \mathbf{Y}=\gamma \cdot \frac{\mathbf{X}-\mu}{\sqrt{\sigma^{2}+\epsilon}} +\beta.
  \label{eq:inference1}
\end{equation}
Here, $\epsilon$ is a small constant, and $\gamma, \beta \in \mathbb{R}^{C}$ are learnable parameters. The inference statistics $\mu, \sigma^{2} \in \mathbb{R}^{C}$ are estimated in the training process and independent of inputs. Since the statistics and parameters are fixed during inference, offline normalization can be merged. The pseudo code for fusing offline normalization with adjacent linear operation can be found in Algorithm~\ref{alg:merge2next}. On the contrary, LN requires calculation for on-the-fly statistics $\mu_{LN}=\mu_{LN}(\mathbf{X}), \sigma_{LN}^{2}=\sigma_{LN}^{2}(\mathbf{X}) $ that consumes extra computation time.

\begin{algorithm}[!t]
\caption{Fusing Normalization}
\label{alg:merge2next}

\begin{algorithmic}[1]
\Require $ \gamma, \beta, \mu, \sigma^2 \in \mathbb{R}^{C}$ 
\hfill \textcolor[rgb]{0.56,0.76,0.9}{$//$ in Equation~\ref{eq:inference1}} \newline
$ \mathbf{W} \in \mathbb{R}^{C_{\text{out}} \times C }, b \in \mathbb{R}^{C_{\text{out}}} $
\hfill \textcolor[rgb]{0.56,0.76,0.9}{$//$ parameters in the subsequent layer}
\Ensure $ \mathbf{W}' \in \mathbb{R}^{ C_{\text{out}} \times C}, b' \in \mathbb{R}^{ C_{\text{out}} } $  \hfill \textcolor[rgb]{0.56,0.76,0.9}{$//$ fused parameters}

\State $ \tilde{\gamma} = \gamma / \sigma $
\State $ \tilde{\beta} = \beta - \tilde{\gamma} \cdot \mu $
\State $ b' = b + \mathbf{W} \times \tilde{\beta} $ \hfill \textcolor[rgb]{0.56,0.76,0.9}{$//$ $y=\mathbf{W}(Norm(x))+b$ is equivalent to}
\State $ \mathbf{W}' = \mathbf{W} \cdot (\textbf{1}_{ C_{\text{out}} } \times  \tilde{\gamma}^{T}) $ \hfill \textcolor[rgb]{0.56,0.76,0.9}{$//$ $ y=\mathbf{W}'x+b' $}

\end{algorithmic}
\end{algorithm}

In forward propagation of training, the normalization procedure is shown as follows,

\begin{equation}
  \mathbf{Z}_{t}=\frac{\mathbf{X}_{t}-\hat{\mu}_t}{\sqrt{\hat{\sigma}_{t}^{2}+\epsilon}},
  \label{eq:fp1}
\end{equation}

\begin{equation}
  \mathbf{Y}_{t}=\gamma \cdot \mathbf{Z}_{t}+\beta.
  \label{eq:fp2}
\end{equation}

\noindent Let $\mathbf{Z}_{t}$ denote the normalized alternative to input $\mathbf{X}_{t}$ at iteration $t$. The training statistics for normalizing are marked as $\hat{\mu}_t$ and $ \hat{\sigma}_{t}^{2}$, given by

\begin{equation}
  \hat{\mu}_t= \Theta_{\mu}(\mu_t, \cdots, \mu_{t-M+1}),
  \label{eq:Theta-mu}
\end{equation}

\begin{equation}
  \hat{\sigma}_{t}^{2}= \Theta_{\sigma^2}(\sigma^2_t, \cdots, \sigma^2_{t-M+1}).
  \label{eq:Theta-sigma}
\end{equation}

\noindent Here, $\mu_t, \cdots, \mu_{t-M+1}$ and $\sigma^2_t, \cdots, \sigma^2_{t-M+1}$ are sequences of recorded statistics from recent $M$ iterations. We consider $\mu_{t}$ and $\sigma_{t}^{2}$ to be the first-moment and second-moment statistics for current input $\mathbf{X}_{t}$. In general, the training statistics can be used to update the inference statistics by applying moving averages.
In backward propagation, the gradients of loss $L$ pass as:

\begin{equation}
  \frac{\partial L}{\partial \mathbf{Z}_{t}} = \gamma\cdot \frac{\partial L}{\partial \mathbf{Y}_{t}},
  \label{eq:bp1}
\end{equation}

\begin{equation}
  \frac{\partial L}{\partial \mathbf{X}_{t}} =\frac{1}{\sqrt{\hat{\sigma}_{t}^{2}+\epsilon} }  (\frac{\partial L}{\partial \mathbf{Z}_{t}}-\psi_{\hat{\mu}_{t}}-\mathbf{Z}_{t}\cdot \psi_{\hat{\sigma}^{2}_{t}}).
  \label{eq:bp2}
\end{equation}

\noindent Giving gradients $\frac{\partial L}{\partial \mathbf{Y}_{t}}$, $\psi_{\hat{\mu}_{t}}$ and $ \psi_{\hat{\sigma}^{2}_{t}}$ indicate the gradient statistics that used for estimating $\frac{\partial L}{\partial \mathbf{X}_{t}}$. In this framework, estimated gradients are gained from averaging functions $\Theta_{g_{\mu}}$ and $\Theta_{g_{\sigma^2}}$,

\begin{equation}
  \psi_{\hat{\mu}_{t}}= \Theta_{g_{\mu}}(g_{\hat{\mu}_t}, \cdots, g_{\hat{\mu}_{t-M+1}}),
  \label{eq:HM-mu}
\end{equation}

\begin{equation}
  \psi_{\hat{\sigma}^{2}_{t}}= \Theta_{g_{\sigma^2}}(g_{\hat{\sigma}^2_t}, \cdots, g_{\hat{\sigma}^2_{t-M+1}}).
  \label{eq:HM-sigma}
\end{equation}

\noindent The gradients passed from $\hat{\mu}_t$ and $\hat{\sigma}_{t}^{2}$ are denoted as $g_{\hat{\mu}_t}$ and $g_{\hat{\sigma}^{2}_{t}}$. 

\noindent \textbf{Offline Methods in the Unified Framework.} With the help of the unified framework, offline methods can be expressed by choosing different statistical objects and averaging functions $\Theta$. 

For instance, BN can be formulated by choosing the mean and variance for the first-moment and second-moment statistics, i.e.,
\begin{equation}
\mu_t=\frac{1}{B}{\textstyle \sum_{i=1}^{B}}\mathbf{x}_{i}, \quad
\sigma_{t}^{2}=\frac{1}{B}{\textstyle \sum_{i=1}^{B}}\mathbf({x}_{i}-\mu_t)^{2}.
\label{eq:BN-stats}
\end{equation}
Then setting averaging functions as:
\begin{equation}
\hat{\mu}_t=\mu_{t}, \quad
\hat{\sigma}^2_t=\sigma^2_t, \quad \psi_{\hat{\mu}_{t}}=g_{\hat{\mu}_t}, \quad \psi_{\hat{\sigma}^{2}_{t}}=g_{\hat{\sigma}^2_t}.
\label{eq:BN-theta}
\end{equation}
Here, the averaging functions simply focus on statistics of current iteration $t$ and ignore the last $M-1$ statistics in the sequences. During training, the inference statistics are updated as,
\begin{equation}
\mu=\alpha \mu + (1-\alpha)\hat{\mu}_t, \quad
\sigma^{2}=\alpha \sigma^{2}+(1-\alpha )\hat{\sigma}_{t}^{2}.
\label{eq:BN-inference}
\end{equation}
As illustrated in Equation~\ref{eq:BN-theta}, BN merely focuses on the activations in the current iteration. This makes BN fragile for large fluctuations over iterations.

For MABN~\cite{yan2020towards}, the authors reduce the number of statistics for stabilizing training and remove the first-moment statistic. Hence, the quadratic mean is chosen as the second-moment statistic:
\begin{equation}
\sigma^2_{t} = \frac{1}{B}{\textstyle \sum_{i=1}^{B}}\mathbf{x}^2_{i}.
\label{eq:MABN-stats}
\end{equation}
The averaging functions for MABN are set as follow,
\begin{equation}
\hat{\mu}_t=0, \quad
\hat{\sigma}^2_t=EMAS\footnote{EMAS (Exponential Moving Average Statistics) for $K$: $K=\eta \cdot K + (1-\eta)\cdot K_t$}, \quad
\psi_{\hat{\mu}_{t}}=0, \quad \psi_{\hat{\sigma}^{2}_{t}}=SMAS\footnote{SMAS (Simple Moving Average Statistics) for $K$ with a window size $M$: $K=\frac{1}{M} {\textstyle \sum_{i=0}^{M-1} K_{t-i}}$}.
\label{eq:MABN-theta}
\end{equation}
The inference statistics $\sigma^{2}$ for MABN are updated the same as Equation~\ref{eq:BN-inference}. With solely applying EMAS in activation statistics, it is hard for MABN to avoid influence from extreme outliers.

\subsection{Fluctuation Smoothing}

\begin{figure}
  \centering
 \includegraphics[width=0.495\linewidth]{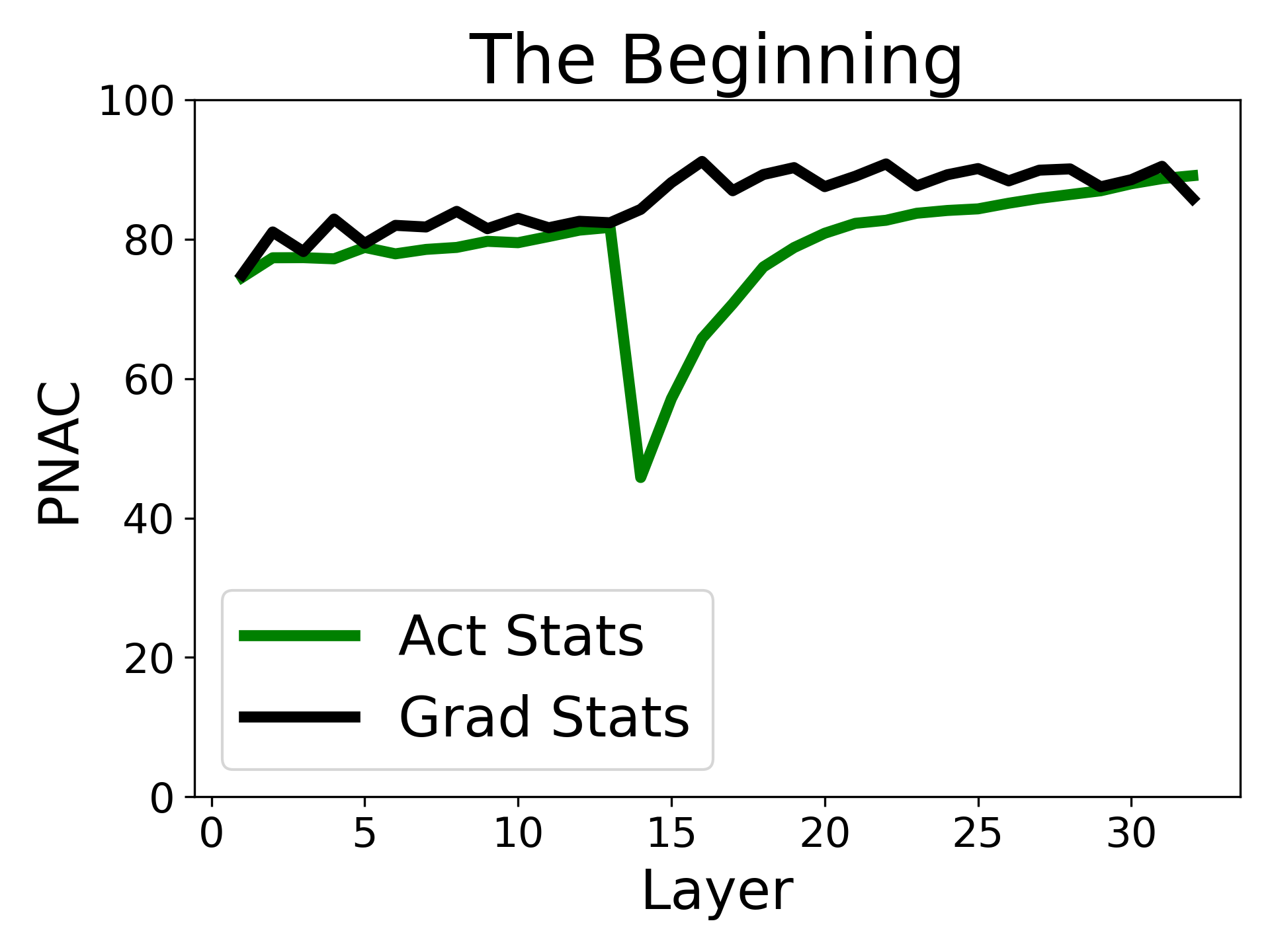}
 \includegraphics[width=0.495\linewidth]{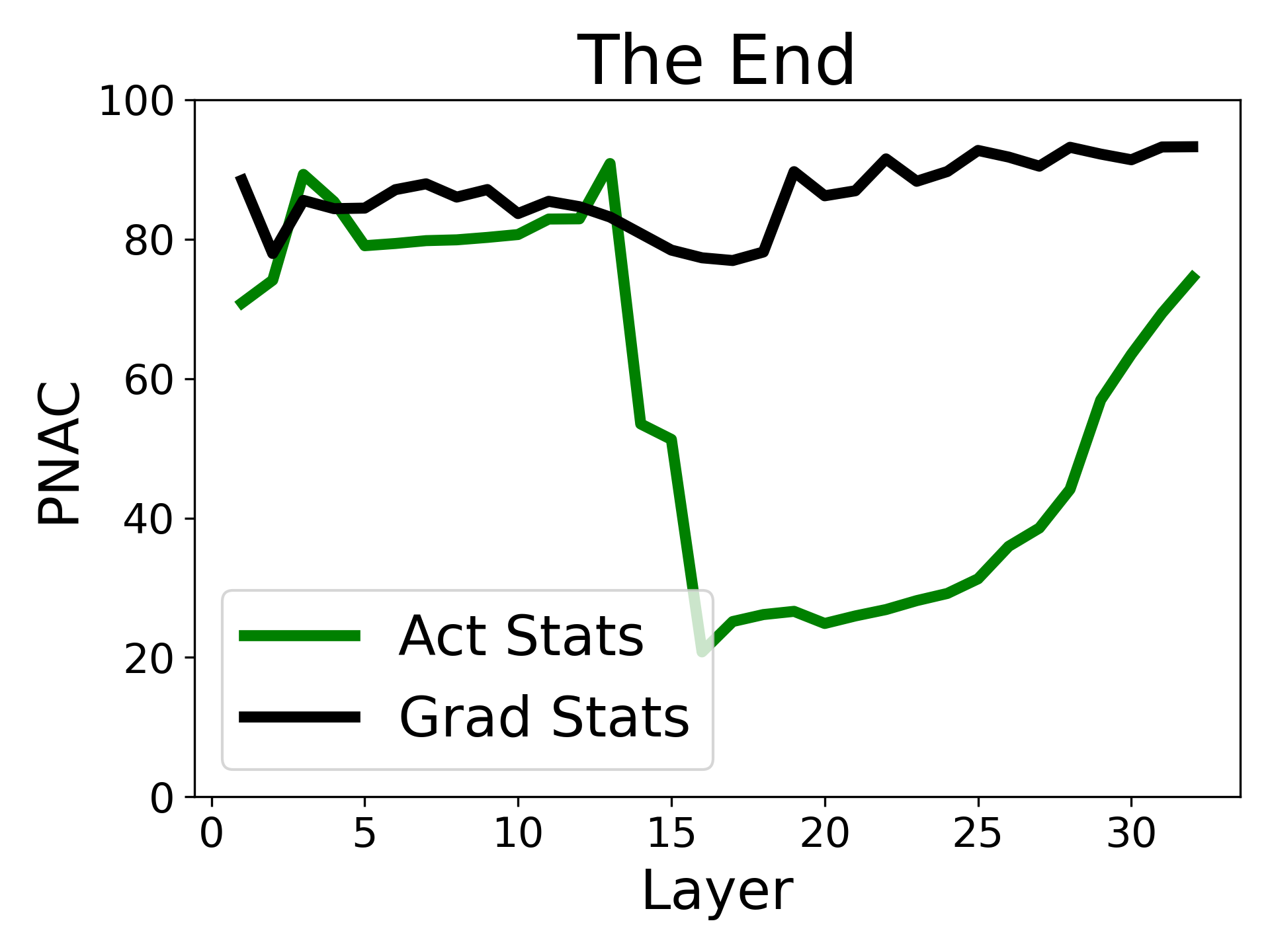}
  \caption{The average PNAC of activation and gradient statistics over iterations in Transformer. A higher PNAC indicates milder fluctuations.}
  \label{fig:2-a}
\end{figure}

\begin{figure*}
  \centering
 \includegraphics[width=\linewidth]{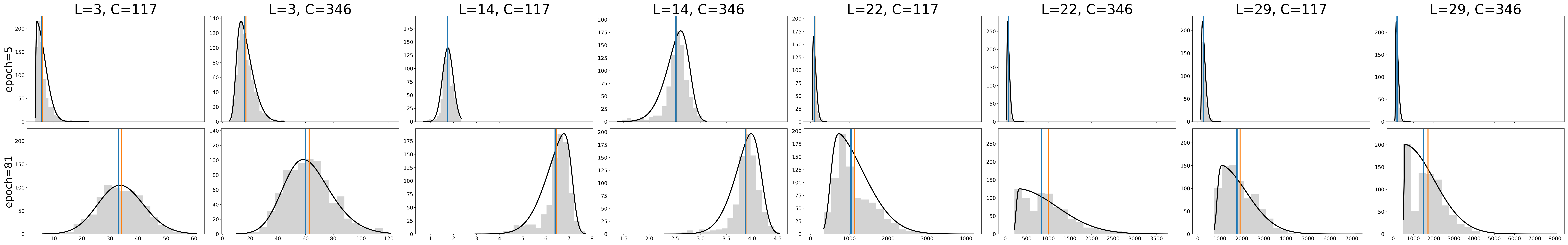}
 \includegraphics[width=\linewidth]{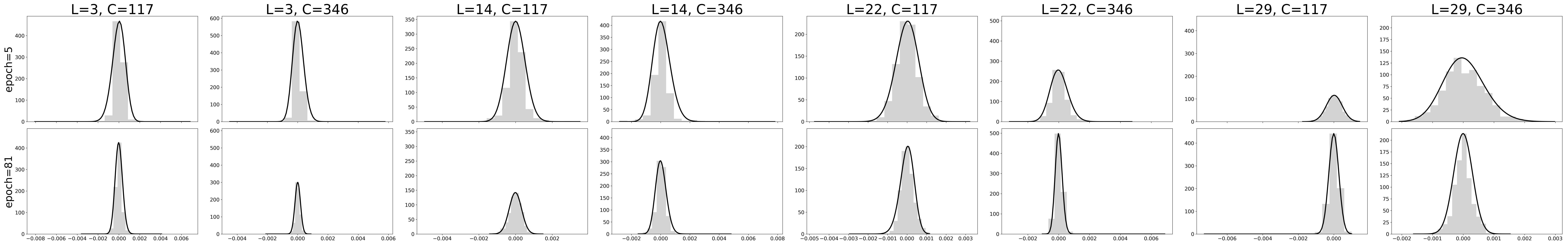}
  \caption{The activation (the 1st and 2nd rows) and gradient (the 3rd and 4th rows) statistics in channel C of normalization layer L. In activation statistics, we show the GM and AM of activation statistics in solid 'blue' and 'orange' lines respectively. }
  \label{fig:2-b}
\end{figure*}

\begin{algorithm}[!t]
\caption{ Fluctuation Smoothing }
\label{alg:un}

\raggedright
\textbf{Forward Propagation}
\begin{algorithmic}[1]
\Require $\mathbf{X}_t \in \mathbb{R}^{B \times C}$
\Ensure $\mathbf{Y}_t \in \mathbb{R}^{B \times C}$

\State $ \sigma^2_{t} = \frac{1}{B}{\textstyle \sum_{i=1}^{B}}\mathbf{x}^2_{t,i} $
\hfill \textcolor[rgb]{0.56,0.76,0.9}{$//$ mini-batch quadratic mean} 

\State  $ \hat{\sigma}_{t}^{2} = \sqrt[M]{{\textstyle \prod_{i=0}^{M-1}} \sigma_{t-i}^{2}} $
\hfill \textcolor[rgb]{0.56,0.76,0.9}{$//$ geometric mean} 

\State $\mathbf{Z}_{t} = \frac{ \mathbf{X}_{t} }{ \sqrt{\hat{\sigma}_{t}^{2}+\epsilon} }$
\hfill \textcolor[rgb]{0.56,0.76,0.9}{$//$ normalizing}

\State $ \mathbf{Y}_t = \gamma \cdot \mathbf{Z}_t + \beta $
\hfill \textcolor[rgb]{0.56,0.76,0.9}{$//$ re-scaling and shifting}

\State $ \sigma^{2} = \alpha \sigma^{2} + (1-\alpha ) \hat{\sigma}^{2}_{t} $
\hfill \textcolor[rgb]{0.56,0.76,0.9}{$//$ updating for inference}

\end{algorithmic}

\textbf{Backward Propagation}
\begin{algorithmic}[1]
\Require $\frac{\partial L}{\partial \mathbf{Y}_t } \in \mathbb{R}^{B \times C}$
\Ensure $\frac{\partial L}{\partial \mathbf{X}_t } \in \mathbb{R}^{B \times C}$

\State $ \frac{\partial L}{\partial \mathbf{Z}_{t}} = \gamma \cdot \frac{\partial L}{\partial \mathbf{Y}_{t}} $

\State $ g_{\hat{\sigma}^{2}_t} = \frac{1}{B} {\textstyle \sum_{i=1}^{B}\frac{\partial L}{\partial \mathbf{z}_{i}}\mathbf{z}_{i} } $
\hfill \textcolor[rgb]{0.56,0.76,0.9}{$//$ gradients from $ \hat{\sigma}^2_t $}

\State $ \psi_{\hat{\sigma}^2_t} = \alpha \psi_{\hat{\sigma}^2_{t-1}} + (1 - \alpha) \frac{1}{M} {\textstyle \sum_{i=0}^{M-1} g_{\hat{\sigma}^{2}_{t-i}}} $
\hfill \textcolor[rgb]{0.56,0.76,0.9}{$//$ estimating gradients}

\State $ \frac{\partial L}{\partial \mathbf{X}_{t}} = \frac{1}{\sqrt{\hat{\sigma}_{t}^{2} +\epsilon}} (\frac{\partial L}{\partial \mathbf{Z}_{t}} - \mathbf{Z}_{t} \cdot \psi_{\hat{\sigma}^{2}_t}) $

\end{algorithmic}

\end{algorithm}

\textbf{Analysis with Normality Test.} We dive deeper to analyze the abnormal behaviors of activation statistics in Transformers. To investigate the magnitude of fluctuations in activation and gradient statistics, we quantitatively analyze the abnormal behaviors with quadratic mean as the second-moment statistics. For statistics $\sigma_t^2 \in \mathbb{R}^{C}$, we conduct the normality test\cite{d1973tests} on a sequence of $\sigma_t^2, \cdots, \sigma_{t-M+1}^2$ to establish whether or not the sequence comes from a normally distributed population, then calculate the percentage of channels held for normality. We define the Percentage of Normality over All Channels (PNAC) which measure the degree of fluctuations in statistics:
\begin{equation}
PNAC=\frac{|\{c_i|\text{Normality Test}(c_i), p > 0.05 \}|}{C} \times 100\%, i=1,\cdots,C.
\label{eq:pnac}
\end{equation}
The lower PNAC, the larger fluctuations in statistics. In this way, we compute PNAC for each layer averaged over iterations and plot it in Figure~\ref{fig:2-a}. At the very beginning of training, both activation and gradient statistics mildly fluctuate over iterations. At the end of the training, there is a significant drop in the PNAC of activation statistics which means large fluctuations exist in activation statistics. On the contrary, we find that there are milder fluctuations in gradient statistics.

Moreover, we next visualize the activation and gradient statistics in different layers and channels, as shown in Figure~\ref{fig:2-b}. The range of activation statistics gets larger along the depth and training process. We find that the skewed distribution of activation statistics contains extreme outliers that could impact the arithmetic mean. We thus turn to adopt geometric mean (GM) with less sensitivity to outliers instead of arithmetic mean (AM) to gain a better representation of activation statistics in a skewed distribution. The averaging functions are defined as:

\begin{equation}
\hat{\mu}_t=0, \quad
\hat{\sigma}^2_t=\sqrt[M]{{\textstyle \prod_{i=0}^{M-1}} \sigma_{t-i}^{2}},
\label{eq:UN-theta-1}
\end{equation}

\begin{equation}
\psi_{\hat{\mu}_{t}}=0, \quad
\psi_{\hat{\sigma}^{2}_{t}}=\alpha \psi_{\hat{\sigma}^2_{t-1}} + (1- \alpha) \frac{1}{M} {\textstyle \sum_{i=0}^{M-1} g_{\hat{\sigma}^{2}_{t-i}}}.
\label{eq:UN-theta-2}
\end{equation}

\noindent By applying quadratic mean as the second-moment statistics, we visualize the GM and AM of activation statistics in Figure~\ref{fig:2-b}:in an approximately normal distribution (i.e., mild fluctuations), GM is close to AM; in skewed distribution (i.e., large fluctuations), the extreme outliers greatly influenced AM, while GM is still close to the majority. Owing to the gradient statistics $g_{\hat{\sigma}^2_{t}}$ are first-moment statistics and do not obey non-negativity constraints, it is unable to use GM directly. Therefore, we utilize AM in gradient statistics that further with a momentum for gradient estimation in backward propagation. Specially, we leverage the quadratic mean as the second-moment statistics in our method to reduce the number of statistics that ensure the stability in applying moving average strategies, as proved in~\cite{yan2020towards}. Our strategy could be formulated as shown in Algorithm~\ref{alg:un}. Omitting the impact of updated weights over different iterations, UN is set with moderate window sizes. 

\subsection{Outlier Filtration}

Although the fluctuation smoothing is leveraged to calibrate the activation statistics, extreme outliers are observed and somehow lead to instability in training (as shown in Figure~\ref{fig:1}). With the moving average strategies (in Equation~\ref{eq:Theta-mu} and ~\ref{eq:Theta-sigma}) applied to activation statistics, it is impossible to calculate the accurate gradients from previous iterations~\cite{huang2020normalization}. Once extreme outliers deteriorate the gradient estimation error, the risk of instability increases. Based on the assumption, we attempt to take a step further by introducing an adaptive outlier filtration strategy. More specifically, the main goal of outlier filtration is to decide when to apply the moving average strategies. To identify outliers, we set an adaptive threshold for outlier filtration with the \emph{$AM-GM$ inequality}~\cite{aldaz2012sharp}. Let $ \Omega_t = (\sigma^2_{t}, \sigma^2_{t - 1},\cdots, \sigma^2_{t-M+1})$ denote the $M$ recent activation statistics recorded in forward propagation at iteration $t$, where $M > 1$, then we have

\begin{equation}
E(\Omega_t) - \Pi(\Omega_t) \le M \cdot V(\Omega_t^{\frac{1}{2}}),
  \label{eq:10}
\end{equation}

\noindent where $\Omega_t^{\frac{1}{2}}=(\sigma_{t}, \cdots, \sigma_{t-M+1})$ and $V(\cdot)$, $E(\cdot)$, $\Pi(\cdot)$ are operators that calculate the variance, arithmetic mean, and geometric mean for input respectively. The extreme outliers will enlarge variances. We thus use the upper bound of the last iteration to detect outliers for the current iteration. Hence, $M \cdot V(\Omega_{t-1}^{\frac{1}{2}})$ can be used as an adaptive threshold for outlier filtration. That is to say, once the mini-batch is deemed to contain extremely large outliers and all the moving average strategies will be dropped in a specific normalization layer,
\begin{equation}
\begin{cases}
 \hat{\sigma}^2_t=\sigma^2_t, \quad \psi_{\hat{\sigma}^2_t}=g_{\hat{\sigma}^2_t} & \text{ if } E(\Omega_{t}) - \Pi(\Omega_t) > M \cdot V(\Omega_{t - 1}^{\frac{1}{2}})  \\
 \text{Equation (\ref{eq:UN-theta-1}) and (\ref{eq:UN-theta-2}) } & \text{ otherwise. }
\end{cases}
\label{eq:outlier}
\end{equation}
The current statistics $\sigma_{t}^{2}$ and $g_{\hat{\sigma}^{2}_{t}}$ will be used for forward-propagating and backward-propagating once outliers are found. In Equation~\ref{eq:outlier}, the threshold for outlier filtration is independent of the specific input $X_{t}$, making this strategy more adaptive to the ever-changing activation statistics during training. Besides, $\sigma^{2}$ and $\psi_{\hat{\sigma}^{2}_t}$ are used to update the recorded statistics at iteration $t$ for passivating the outliers in moving average. The rest of the operations are just the same as Algorithm~\ref{alg:un}.

\begin{lemma} 
Let $ A_t = ( a_t, a_{t - 1}, ..., a_{t-M+1} ) $ and $ A_{t - 1} = ( a_{t-1}, a_{t-2},\\ ..., a_{t-M} ) $ are two vectors satisfying $ a_i > 0 $ and $ a_i < a_t, \forall a_i \in A_{t - 1} $. $ E(\cdot) $, $ \Pi(\cdot) $, and $ V(\cdot) $ denote the calculation of arithmetic mean, geometric mean, and variance for an arbitrary vector. If $ M \cdot V(A_{t - 1}^\frac{1}{2}) < E(A_{t}) - \Pi(A_{t}) $ holds, then $ \lambda = \frac{\Pi(A_{t})}{a_t} < 1 $. 
\label{lemma1}
\end{lemma}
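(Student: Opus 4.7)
The plan is to show that the conclusion $\lambda < 1$ follows directly from the structural hypotheses --- positivity $a_i > 0$ and the strict bound $a_i < a_t$ for every $a_i \in A_{t-1}$ --- without actually invoking the outlier-detection inequality $M \cdot V(A_{t-1}^{1/2}) < E(A_t) - \Pi(A_t)$. That inequality is what triggers the outlier branch of Equation~\ref{eq:outlier} in the surrounding algorithm, so it describes \emph{when} the lemma is applied rather than providing any leverage for the derivation itself.

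First I would observe that $A_t$ and $A_{t-1}$ overlap in exactly their middle $M-1$ coordinates $a_{t-1}, a_{t-2}, \ldots, a_{t-M+1}$, so each of these inherits the strict bound $a_{t-i} < a_t$ for $i = 1, \ldots, M-1$. Next I would factor $a_t^{1/M}$ out of the geometric mean $\Pi(A_t) = \bigl(a_t \prod_{i=1}^{M-1} a_{t-i}\bigr)^{1/M}$ and rewrite
\begin{equation*}
\lambda \;=\; \frac{\Pi(A_t)}{a_t} \;=\; \left( \prod_{i=1}^{M-1} \frac{a_{t-i}}{a_t} \right)^{1/M}.
\end{equation*}
Positivity makes each ratio $a_{t-i}/a_t$ strictly positive, the inherited bound makes each ratio strictly less than $1$, and monotonicity of the $M$-th root on $(0,\infty)$ finishes the argument.

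The main obstacle is essentially nonexistent: the derivation is a one-line algebraic rearrangement. The only subtlety worth flagging is that both strictness (the hypothesis is $a_i < a_t$, not $\leq$) and positivity must be used explicitly, so that the ratios lie in $(0,1)$ and the $M$-th root preserves strict inequality. I would close by remarking that the outlier-detection inequality in the statement is purely contextual --- it identifies the regime in which $a_t$ behaves as an extreme outlier relative to the window, and it is precisely in that regime that the conclusion $\lambda < 1$ will subsequently be used to argue that the moving-average updates of Algorithm~\ref{alg:un} passivate the outlier.
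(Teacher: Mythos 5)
Your proof is correct, but it takes a genuinely different and more elementary route than the paper's. You observe that $\Pi(A_t)^M = a_t\prod_{j=1}^{M-1}a_{t-j} < a_t^M$ follows immediately from positivity and the hypothesis $a_i < a_t$ for all $a_i \in A_{t-1}$ (every non-leading entry of $A_t$ lies in $A_{t-1}$), so $\lambda = \bigl(\prod_{j=1}^{M-1} a_{t-j}/a_t\bigr)^{1/M} < 1$ without ever touching the detection inequality --- and you are right that this makes the condition $M\cdot V(A_{t-1}^{1/2}) < E(A_t)-\Pi(A_t)$ logically redundant for the stated conclusion. The paper instead routes through that condition: it applies the sharp AM--GM bound $E(A_{t-1})-\Pi(A_{t-1}) \le M\cdot V(A_{t-1}^{1/2})$ to the previous window, combines it with the hypothesis to get $\Pi(A_t)-\Pi(A_{t-1}) < E(A_t)-E(A_{t-1}) = \frac{a_t-a_{t-M}}{M}$, uses the telescoping identity $\Pi(A_{t-1})/\Pi(A_t)=\sqrt[M]{a_{t-M}/a_t}$ to reduce this to $M\lambda\bigl(1-\sqrt[M]{a_{t-M}/a_t}\bigr) < 1-\frac{a_{t-M}}{a_t}$, and closes by bounding the monotone function $f(x)=\frac{1-x}{1-\sqrt[M]{x}}$ by $M$ on $[0,1)$. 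What the paper's longer argument buys is the quantitative intermediate bound $\lambda < \frac{1}{M}\,f\!\left(\frac{a_{t-M}}{a_t}\right)$, which links the attenuation factor to the triggering threshold; what your argument buys is brevity, an exact closed form for $\lambda$, and the clarifying observation that the structural hypothesis $a_i<a_t$ (i.e., $a_t$ dominating the window) is what actually forces $\lambda<1$. Both establish the conclusion needed for Corollary~\ref{corollary}.
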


\begin{proof}

From the AM-GM inequality and the lemma condition, we have the following inequality
\begin{align}
    \Pi(A_{t}) - \Pi(A_{t - 1}) <  E(A_{t}) - E(A_{t - 1}). 
\end{align}

\noindent Since $ E(A_{t}) - E( A_{t - 1}) = \frac{a_t - a_{t - M}}{M} $, then
\begin{equation}
    M \cdot (\Pi(A_{t}) - \Pi(A_{t - 1})) < a_t - a_{t - M}.
    \label{ineq:12}
\end{equation}

\noindent With $a_t > 0$, the above inequality can be transformed to
\begin{equation}
    M \cdot \lambda (1 - \sqrt[M]{\frac{a_{t-M}}{a_t}}) < 1 - \frac{a_{t-M}}{a_t}. 
\end{equation}

\noindent Therefore, 
\begin{equation}
    \lambda < \frac{1}{M} \cdot \frac{1 - \frac{a_{t-M}}{a_t}}{1 - \sqrt[M]{\frac{a_{t-M}}{a_t}}} < \frac{1}{M} \cdot M = 1. 
\end{equation}

\noindent The last inequality holds because function $ f(x) = \frac{1 - x}{ 1 - \sqrt[M]{x}} $ is monotonically increasing in [0, 1).
\end{proof}

\begin{corollary}
For a Network using UN without outlier filtration, let $ g_{\sigma^2_t} $ denote the ground truth gradient computed based on the chain rule and $ \tilde{g}_{\sigma^2_t} $ denote estimated gradient given by a averaging function. If $ x_t $ contains an outlier, then

\label{corollary}
\end{corollary}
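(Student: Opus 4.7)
The plan is to show that in the presence of an outlier $x_t$ (equivalently, $a_t = \sigma^2_t$ far exceeds the previous $\sigma^2_{t-i}$), the gradient that the averaging function produces is systematically dampened relative to the true chain-rule gradient by a factor controlled by the $\lambda$ of Lemma~\ref{lemma1}. I would first make precise what the ``ground truth'' gradient is. Because the forward pass uses $\hat{\sigma}^2_t = \sqrt[M]{\prod_{i=0}^{M-1} \sigma^2_{t-i}}$, the chain rule gives
\begin{equation}
g_{\sigma^2_t} \;=\; \frac{\partial L}{\partial \hat{\sigma}^2_t}\cdot \frac{\partial \hat{\sigma}^2_t}{\partial \sigma^2_t} \;=\; g_{\hat{\sigma}^2_t}\cdot \frac{1}{M}\cdot \frac{\hat{\sigma}^2_t}{\sigma^2_t} \;=\; \frac{\lambda}{M}\, g_{\hat{\sigma}^2_t},
\end{equation}
where $\lambda = \Pi(A_t)/a_t$ in the notation of Lemma~\ref{lemma1}. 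This is the quantity that a correct backward pass through the geometric mean would emit at iteration $t$.

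Next I would write down the estimator actually used without outlier filtration. Per Algorithm~\ref{alg:un}, $\psi_{\hat{\sigma}^2_t}$ is a momentum-blended arithmetic mean of $g_{\hat{\sigma}^2_{t-i}}$, $i=0,\ldots,M-1$, and the effective gradient back to $\sigma^2_t$ is further scaled by the geometric-mean sensitivity $\frac{1}{M}\hat{\sigma}^2_t/\sigma^2_t = \lambda/M$. Thus I can write
\begin{equation}
\tilde{g}_{\sigma^2_t} \;=\; \frac{\lambda}{M}\Bigl(\alpha\,\psi_{\hat{\sigma}^2_{t-1}} + (1-\alpha)\,\tfrac{1}{M}\textstyle\sum_{i=0}^{M-1} g_{\hat{\sigma}^2_{t-i}}\Bigr),
\end{equation}
so the comparison between $\tilde{g}_{\sigma^2_t}$ and $g_{\sigma^2_t}$ reduces to comparing the instantaneous $g_{\hat{\sigma}^2_t}$ to the moving-averaged quantity in parentheses, multiplied by the common prefactor $\lambda/M$.

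The central step is then to invoke Lemma~\ref{lemma1}: under the outlier hypothesis $a_i < a_t$ for all $i<t$ in the window and $M\cdot V(A_{t-1}^{1/2}) < E(A_t)-\Pi(A_t)$, we have $\lambda<1$ strictly. I would use this to conclude that the shared prefactor already discounts the backward signal, and moreover that the historical terms $g_{\hat{\sigma}^2_{t-i}}$ (computed at iterations where no outlier was present and hence where the corresponding $\lambda_{t-i}$ was close to $1$) dominate the moving average over the current spike. Combining, I would obtain a bound of the form $|\tilde{g}_{\sigma^2_t}| \leq \lambda\,|g_{\sigma^2_t}| + O(\text{history})$, or equivalently a lower bound on the gradient estimation error $|\tilde{g}_{\sigma^2_t} - g_{\sigma^2_t}|$ that does not vanish as the outlier grows, which is precisely what motivates resetting to the instantaneous statistic in Equation~\ref{eq:outlier}.

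The main obstacle I anticipate is handling the historical term cleanly. Ground truth involves only quantities at iteration $t$, while the estimator mixes in $M-1$ past gradients whose true relation to $g_{\sigma^2_t}$ depends on the non-stationary training dynamics. I would side-step this by making the standard short-horizon assumption that $g_{\hat{\sigma}^2_{t-i}}\approx g_{\hat{\sigma}^2_t}$ within the window (so that the moving-average term is approximately $g_{\hat{\sigma}^2_t}$), reducing the inequality to $|\tilde{g}_{\sigma^2_t}|\approx \frac{\lambda}{M}|g_{\hat{\sigma}^2_t}|$ versus $|g_{\sigma^2_t}| = \frac{\lambda}{M}|g_{\hat{\sigma}^2_t}|$ blown up by the un-dampened scaling $1/\lambda$ after filtration; the gap is controlled by $1-\lambda$, which Lemma~\ref{lemma1} guarantees is strictly positive whenever an outlier triggers the condition.
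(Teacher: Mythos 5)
You have the two essential ingredients in hand --- the chain rule through the geometric mean, giving $\frac{\partial \hat{\sigma}^2_t}{\partial \sigma^2_t} = \frac{1}{M}\cdot\frac{\hat{\sigma}^2_t}{\sigma^2_t} = \frac{\lambda}{M}$, and Lemma~\ref{lemma1} to conclude $\lambda<1$ under the outlier condition --- but your identification of the estimator $\tilde{g}_{\sigma^2_t}$ breaks the argument. The conclusion to be proved is the quantitative ratio bound $g_{\sigma^2_t}/\tilde{g}_{\sigma^2_t} < 1/M$, and it rests on the fact that UN's geometric mean is \emph{detached} from the backward pass: the averaging function emits $\partial L/\partial\hat{\sigma}^2_t$ (through $\psi_{\hat{\sigma}^2_t}$) \emph{in place of} $\partial L/\partial\sigma^2_t$, without ever multiplying by the Jacobian $\partial\hat{\sigma}^2_t/\partial\sigma^2_t$. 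The paper accordingly takes $\tilde{g}_{\sigma^2_t}=\partial L/\partial\hat{\sigma}^2_t$, so that
\[
\frac{g_{\sigma^2_t}}{\tilde{g}_{\sigma^2_t}} \;=\; \frac{\partial\hat{\sigma}^2_t}{\partial\sigma^2_t} \;=\; \frac{1}{M}\cdot\frac{\Pi(\Omega_t)}{\sigma^2_t} \;=\; \frac{\lambda}{M} \;<\; \frac{1}{M},
\]
and the proof is one line. By instead writing $\tilde{g}_{\sigma^2_t}=\frac{\lambda}{M}\bigl(\alpha\,\psi_{\hat{\sigma}^2_{t-1}}+(1-\alpha)\frac{1}{M}\sum_i g_{\hat{\sigma}^2_{t-i}}\bigr)$ you have attached to the estimator exactly the Jacobian factor that the detached backward pass omits; the $\lambda/M$ then cancels in the ratio, and under your own short-horizon assumption ($g_{\hat{\sigma}^2_{t-i}}\approx g_{\hat{\sigma}^2_t}$) the ratio collapses to roughly $1$ rather than to something below $1/M$. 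This is why you end up needing nonstationarity assumptions on the gradient history and can only reach a qualitative statement (``the gap is controlled by $1-\lambda$''), which is not the corollary.

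The fix is small but essential: state explicitly that the geometric-mean averaging modifies the forward statistics but is cut out of the backward graph, define $\tilde{g}_{\sigma^2_t}$ as the gradient with respect to $\hat{\sigma}^2_t$ that the backward pass substitutes for $g_{\sigma^2_t}$, and then the entire corollary reduces to evaluating the missing Jacobian and invoking Lemma~\ref{lemma1}. The momentum/arithmetic-mean structure of $\psi_{\hat{\sigma}^2_t}$ in Algorithm~\ref{alg:un}, which consumes most of your effort, plays no role in the paper's argument and can be left out entirely.
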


\begin{equation}
 \frac{g_{\sigma^2_t}}{\tilde{g}_{\sigma^2_t}} < \frac{1}{M}.
  \label{eq:x}
\end{equation}

\begin{proof}

UN uses geometric mean to estimate the training statistics, which modifies the current mini-batch statistics but detaches from the backward pass. Hence,

\begin{align}
    g_{\sigma^2_t} = \frac{ \partial{L} }{ \partial{\sigma^2_t} } = \frac{ \partial{L} }{ \partial{ \hat{\sigma}^2_t} } \cdot \frac{ \partial{\hat{\sigma}^2_t} }{ \partial{\sigma^2_t} } = \tilde{g}_{\sigma^2_t} \frac{ \partial{\hat{\sigma}^2_t} }{ \partial{ \sigma^2_t } }. 
\label{seq:9}
\end{align}

\noindent Combined with Lemma~\ref{lemma1}, we have

\begin{equation}
    \frac{g_{\sigma^2_t}}{\tilde{g}_{\sigma^2_t}} = \frac{ \partial{\hat{\sigma}^2_t} }{ \partial{\sigma^2_t} } = \frac{1}{M} \cdot \frac{\hat{\sigma}^2_t}{\sigma^2_t} 
    = \frac{1}{M} \cdot \frac{ \Pi(\Omega_t) }{ \sigma^2_t } < \frac{1}{M}.
\label{seq:10}
\end{equation}

\end{proof}

\noindent With the adaptive outlier filtration strategy proposed in this section, we can avoid a catastrophic gradient estimation error. Based on Corollary~\ref{corollary}, we prove that the gradient estimation error will be shrunk by a factor $ 1/M $ when an outlier is found. 

\section{Experiments}

\subsection{Implementation Details}
To put all experiments on an equal footing, we simply replace all LN in corresponding architectures with its drop-in counterparts, without varying the position of the normalization layer or adding extra operators. All models are trained and tested with the same configurations. To simplify the settings, the momentum of UN is set as $\alpha=0.9$ (the same as BN) to avoid repeatedly tuning hyper-parameter over different tasks. Akin to ~\cite{yan2020towards}, we set a warming-up step for UN, 4K by default. To show robust results, we report the average performance from 5-run results for small datasets, IWSLT14, CIFAR10, and CIFAR100. See Appendix~\ref{setting-details} for more experimental details.

\subsection{Results}

\begin{table}
  \centering
  \caption{The performance (BELU~\cite{papineni2002bleu}, higher is better) of Transformers on neural machine translation. ‘Offline’ indicates a method can be fused in inference. 'NoNorm' means models without normalization. 'FAIL' indicates collapse during training. PN$^*$ is PN without a layer-scale layer. }
  \label{tab:nmt}
  \small
  \begin{threeparttable}
  \begin{tabular}{l|c|cc|cc}
    \toprule
    \multirow{2}{*}{Method} & \multirow{2}{*}{Offline} &\multicolumn{2}{c}{IWSLT14} & \multicolumn{2}{|c}{WMT14}  \\
    \cline{3-6}
    & & BLEU & $\bigtriangleup$ & BELU & $\bigtriangleup$ \\
    \midrule
    LN~\cite{LayerNorm}    & \XSolidBrush & 35.3 &   &   40.0 &      \\
    RMSNorm~\cite{zhang2019root} &\XSolidBrush   & 35.3 & 0.0 &   39.8 & -0.2 \\
    PN~\cite{shen2020powernorm}  &\XSolidBrush   & 35.3 & 0.0 &   39.8 & -0.2 \\
    \midrule
    
    NoNorm & / &  FAIL & / &   32.8 & -7.2 \\
    \cline{1-6}
    BN~\cite{ioffe2015batch}  & \Checkmark &  31.1 & -4.2 &  35.1 & -4.9 \\
    MABN~\cite{yan2020towards}      &\Checkmark  &  \textbf{35.4}   & +0.1 &  36.5 & -3.5 \\
    PN$^*$~\cite{shen2020powernorm}    &\Checkmark &  35.0 & -0.3 &  39.7 & -0.3 \\
    UN                                 &\Checkmark &  \textbf{35.4} & +0.1 &  \textbf{39.9} & -0.1 \\
    \bottomrule
  \end{tabular}
  \end{threeparttable}
\end{table}

\begin{table}
  \centering
  \caption{The performance (Top-1 accuracy \%) of image classification on ImageNet-1K and CIFAR10/100. }
  \label{tab:cls}
  \small
  \resizebox{\linewidth}{!}{
  \begin{threeparttable}
    \begin{tabular}{l|c|cc|cc|cc|cc}
    \toprule
    \multirow{3}{*}{Method}  & \multirow{3}{*}{Offline} &  \multicolumn{2}{c}{Swin-T} & \multicolumn{6}{|c}{T2T-ViT-14$^\dagger$}  \\
    \cline{3-10}
    & & \multicolumn{2}{c}{ImageNet} & \multicolumn{2}{|c}{ImageNet} & \multicolumn{2}{|c}{CIFAR10} & \multicolumn{2}{|c}{CIFAR100} \\
    \cline{3-10}
    & & Top1 & $\bigtriangleup$ & Top1 & $\bigtriangleup$ & Top1 & $\bigtriangleup$ & Top1 & $\bigtriangleup$ \\
    \midrule
    LN~\cite{LayerNorm}   & \XSolidBrush  & 81.3 &   & 81.5 & & 98.3 & & 88.4 & \\
    \midrule
    BN~\cite{ioffe2015batch} & \Checkmark &  80.8 & -0.5 & 79.8 & -1.7 & 96.6 & -1.7 & 88.2& -0.2 \\
    MABN~\cite{yan2020towards} & \Checkmark& 80.9 & -0.4 & FAIL & /&/&/ &/ & /\\
    PN$^*$~\cite{shen2020powernorm}& \Checkmark& 80.9 & -0.4 & FAIL &/ &/ &/ & /&/ \\
    UN & \Checkmark& \textbf{81.0} & -0.3 & \textbf{80.9} & -0.6 & 98.3 & 0.0 &88.9 & +0.5 \\
    \bottomrule
  \end{tabular}
  \begin{tablenotes} 
  \item $\dagger$: On CIFAR10/100, models are initialized with pre-trained weights from ImageNet-1K. Note that T2T-ViT-14 cooperates with MABN and PN$^*$ crash during training on ImageNet, we thus do not report the corresponding results on CIFAR10/100.
  \end{tablenotes}
  \end{threeparttable}
  }
  
\end{table}

\begin{table}
  \centering
  \caption{Object detection on COCO val2017 with Faster R-CNN using Swin-T as the backbone. All models are trained with 36 epochs.}
  \label{tab:det-faster-rcnn}
  \small
  \resizebox{\linewidth}{!}{
   \begin{tabular}{l|c|cc|ccccc}
      \toprule
       Method & Offline &  $\text{AP}^{\text{Box}}$ & $\bigtriangleup$ & $\text{AP}^{\text{Box}}_{50}$ & $\text{AP}^{\text{Box}}_{75}$ & $\text{AP}^{\text{Box}}_{s}$ & $\text{AP}^{\text{Box}}_{m}$ & $\text{AP}^{\text{Box}}_{l}$  \\

	\midrule
        LN~\cite{LayerNorm}&\XSolidBrush & 45.5 & & 67.5 & 50.2 & 31.5 & 48.8 & 58.4  \\
        \midrule
		BN~\cite{ioffe2015batch}&  \Checkmark& 44.6 & -0.9 & 66.8 & 48.9 & \textbf{30.4} & 48.0 & 57.8 \\
		MABN~\cite{yan2020towards}& \Checkmark& 44.8 & -0.7 & 66.8 & 49.0 & 29.0 & 47.9 & 57.3 \\
	    PN$^*$~\cite{shen2020powernorm}& \Checkmark& 44.3 & -1.2 & 66.7 & 48.4 & 29.6 & 47.4 & 57.2 \\
		UN&	\Checkmark	& \textbf{45.2} & -0.3 & \textbf{67.2} & \textbf{49.7} & 30.1 & \textbf{48.4} & \textbf{58.2} \\
    \bottomrule
  \end{tabular}
  }
\end{table}

\subsubsection{Neural Machine Translation} The comparison between online and offline methods is listed in Table~\ref{tab:nmt}. Although RMSNorm~\cite{zhang2019root} and PN achieve comparable results with LN, online methods are not able to access efficient deployment on hardware. We report the performance of the Transformers trained without normalization layers (marked as 'NoNorm'). The instability in training and declined performance highlight the necessity of applying normalization. Besides, previous offline methods, such as BN, MABN, and PN$^*$ suffer from degradation of performance. Instead, our method outperforms other offline methods and achieves more balanced results that are on par with online methods on both IWSLT14 and WMT14.

\subsubsection{Image Classification} Table~\ref{tab:cls} reports the results of T2T-ViT-14 and Swin-T on ImageNet. In T2T-ViT-14, MABN and PN$^*$ that leverage vanilla moving average strategies experience divergence in training. BN appears instability since the early stage of training (shown in Figure~\ref{fig:1}), leading to degradation in accuracy. UN enjoys stability during training and obtains an improvement of $1.1\%$ over BN. In Swin-T, the top-1 accuracy of BN drops by $0.5\%$. UN surpasses other offline methods and restores the accuracy to $81.0\%$. After that, we evaluate UN on downstream classification tasks (CIFAR10/100). The training loss fluctuates dramatically and irregularly in T2T-ViT-14 finetuned with BN, leading to a significant drop in accuracy. UN converges stably without tuning any settings and outperforms LN on top-1 accuracy.

\begin{table*}
  \centering
  \caption{Object detection and semantic segmentation on COCO val2017 with Mask R-CNN using Swin-T as the backbone. All models are trained with 36 epochs.}
  \label{tab:det-mask-rcnn}
  \small
  \begin{tabular}{l|c|cc|ccccc|cc|ccccc}
    \toprule
    
      Method & Offline &  $\text{AP}^{\text{Box}}$ & $\bigtriangleup$ & $\text{AP}^{\text{Box}}_{50}$ & $\text{AP}^{\text{Box}}_{75}$ & $\text{AP}^{\text{Box}}_{s}$ & $\text{AP}^{\text{Box}}_{m}$ & $\text{AP}^{\text{Box}}_{l}$ &  $\text{AP}^{\text{Mask}}$ & $\bigtriangleup$ & $\text{AP}^{\text{Mask}}_{50}$ & $\text{AP}^{\text{Mask}}_{75}$ & $\text{AP}^{\text{Mask}}_{s}$ & $\text{AP}^{\text{Mask}}_{m}$ & $\text{AP}^{\text{Mask}}_{l}$ \\
	\midrule
	    LN~\cite{LayerNorm}	&\XSolidBrush & 46.0 & & 68.1 & 50.3 & 31.2 & 49.2 & 60.1 &
	    41.6 & & 65.1 & 44.9 & 25.9 & 45.1 & 56.9 \\
	    \midrule
		BN~\cite{ioffe2015batch} & \Checkmark& 44.9 & -1.1 & 67.2 & 49.0 & 29.6 & 48.4 & 58.3 
		& 40.8 & -0.8 &  64.0 & 43.8 & 24.9 & 44.4 & 55.3 \\ 
		MABN~\cite{yan2020towards} &\Checkmark &  45.1 &-0.9 & 67.2 & 49.6 & \textbf{30.0} & 48.3 & 57.7 
		& 41.0 &-0.6 & 64.2 & 44.1 & 24.9 & 44.7 & 55.0 \\
		PN$^*$~\cite{shen2020powernorm} &\Checkmark & 44.6 &-1.4 & 66.8 & 48.9 & 29.1 & 47.6 & 57.6 
		& 40.7 &-0.9 & 63.7 & 43.6 & 24.1 & 43.8 & 54.9 \\
		UN	&\Checkmark & \textbf{45.6} &-0.4 & \textbf{67.6} & \textbf{50.4} & 29.6 & \textbf{49.2} & \textbf{58.9} 
		& \textbf{41.4} & -0.2 &  \textbf{64.8} & \textbf{44.5} & \textbf{25.2} & \textbf{45.1} & \textbf{55.7} \\
    \bottomrule
  \end{tabular}
\end{table*}

\subsubsection{Object Detection and Instance Segmentation} In Table~\ref{tab:det-faster-rcnn}, our method restores the performance from other offline methods, with only a slight decrease of $0.3\%$ mAP compared to LN. In Table~\ref{tab:det-mask-rcnn}, the result also reveals that our method achieves comparable performance with LN. Here, we show UN surpasses other offline methods that draw strength from the fluctuation smoothing and outlier filtration. With these results, the conclusion easily comes to light that UN could be generalized to other vision tasks more than only image classification.

\subsection{Analysis}

\begin{table}
  \centering
  \caption{Ablation study on COCO val2017. Mask R-CNN with Swin-T is trained for 12 epochs.}
  \small
  \label{tab:ablation-study-2}
  \begin{threeparttable}
  \begin{tabular}{c|c|cc|cccc}
    \toprule
      \multirow{2}{*}{Exp} &  FP &  \multicolumn{2}{c|}{BP} &  \multicolumn{4}{|c}{COCO$^\ddagger$}\\
      \cline{2-8} 
      & GM    & AM &$\alpha$ & $\text{AP}^{\text{Box}}$ & $\Delta$ & $\text{AP}^{\text{Mask}}$ & $\Delta$\\
      \midrule
      1 & \Checkmark&\Checkmark&\Checkmark& 42.8  &   & 39.2 & \\
      \midrule
      2 & 		&\Checkmark&\Checkmark&  42.2 & -0.6 & 38.8 & -0.4 \\
      3 &\Checkmark&		&\Checkmark& 42.4 & -0.4 & 38.8 & -0.4 \\
      4 &\Checkmark&\Checkmark&		& 42.4 & -0.4 & 39.0 & -0.2 \\
      5 &\Checkmark&		&		& 42.5 & -0.3 & 39.0 & -0.2 \\
    \bottomrule
  \end{tabular}
  \end{threeparttable}
  
\end{table}

\begin{table}
  \centering
  \small
  \caption{The effect of using different window sizes ($M$) in UN is evaluated on COCO val2017.}
  \label{tab:win-size-cv}
  \begin{threeparttable}
  \begin{tabular}{l|ccccc}
    \toprule
    $M$ & 2 & 4 & 6 & 8  & 10  \\
    \midrule
    $\text{AP}^{\text{Box}}$ & 42.8 & 42.8  & 42.8 & 42.8 & 42.5  \\
    $\text{AP}^{\text{Mask}}$ & 39.2 & 39.3  & 39.1 & 39.2 & 39.2 \\
    \bottomrule
  \end{tabular}
  \end{threeparttable}
\end{table}

\begin{table}
  \centering
  \caption{The effect of using the outlier filtration is evaluated on ImageNet and COCO val2017. }
  \small    
  \label{tab:outlier-cv}
  \begin{tabular}{c|c|c|c|cc}
    \toprule
    \multirow{3}{*}{\thead{ Outlier \\ Filtration}} & T2T-ViT-14 & 
    \multicolumn{4}{c}{Swin-T}  \\
    \cline{2-6}
    & ImageNet & ImageNet & Faster RCNN & \multicolumn{2}{c}{Mask RCNN}  \\
    & Top1 & Top1 & $\text{AP}^{\text{Box}}$ & $\text{AP}^{\text{Box}}$ & $\text{AP}^{\text{Mask}}$ \\
    \midrule
    w/      & 80.9 & 81.0 & 42.2  & 42.8  & 39.2 \\
    w/o     & FAIL & 80.7 & FAIL  & FAIL  & FAIL \\
    \bottomrule
  \end{tabular}
\end{table}

\subsubsection{Ablation Study} In Table~\ref{tab:ablation-study-2}, we ablate UN to verify the contribution of the basic components. Note that in the fluctuation smoothing of UN, GM, AM, and momentum $\alpha$ are all used as moving average strategies. We remove the basic components one by one, as shown in Exp2-5. In Exp2, the performance deteriorates without any moving strategies applied for activation statistics. In gradient estimation, we conduct a compound moving strategy for gradient statistics, including arithmetic mean and momentum $\alpha$. In exp3-4, performance degrades after removing any basic part of the compound moving strategy, which implies both of them are contributed to better estimation for gradients. To compare to Exp2, we remove all moving strategies from gradient statistics in exp5. It turns out that models suffer from performance loss without gradient estimation. In this result, we show the fluctuation smoothing proposed in this paper has empowered UN to gain solid performance. 
Additionally, we also report the ablation study on IWSLT14, whose details can be found in Appendix~\ref{appendix-nmt}. On IWSLT14, the models can still benefit from the fluctuation smoothing for further improvement.

\subsubsection{Effect of the Window Size} We compare different window sizes $M \in \{2,4,6,8,10\}$ on COCO val2017. Table~\ref{tab:win-size-cv} reports the effect of window sizes in UN. When UN is set with moderate window sizes, the models converge stably and gain competitive performance at the end. 

\begin{figure}
  \centering
  \includegraphics[width=\linewidth]{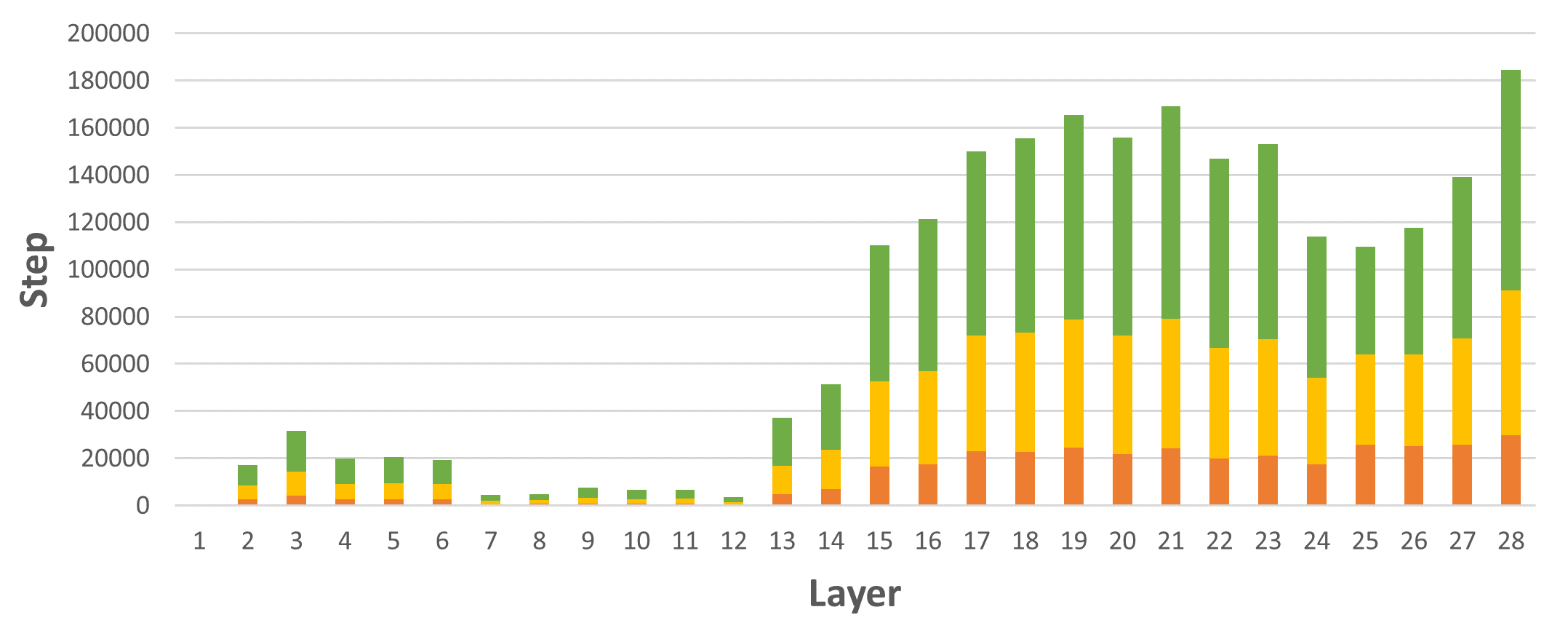}
  \caption{The accumulated steps of dropping moving averages in the outlier filtration during training. We plot the outcomes in red, orange, and green after training 12, 24, and 36 epochs, respectively.}
  \label{fig:3}
\end{figure}

\subsubsection{Outlier Filtration} As illustrated in Figure~\ref{fig:1}, it is easy to see that the outlier filtration stabilizes the training of UN in T2T-ViT while other offline methods tend to crash during training. Table~\ref{tab:outlier-cv} showcases the results with and without outlier filtration. In Transformers, the fluctuations in activation statistics increase along with the depth. Figure~\ref{fig:3} shows the accumulated steps of iterations that have found outliers. Outliers tend to emerge from a deeper layer. As the model is trained with more epochs, the percentage increases. The observation implies that Transformers trained with offline methods might get larger fluctuations when scaling up the depth. This result also reveals that the fluctuations also increase along with the training process.

\begin{figure}
  \centering
  \includegraphics[width=\linewidth]{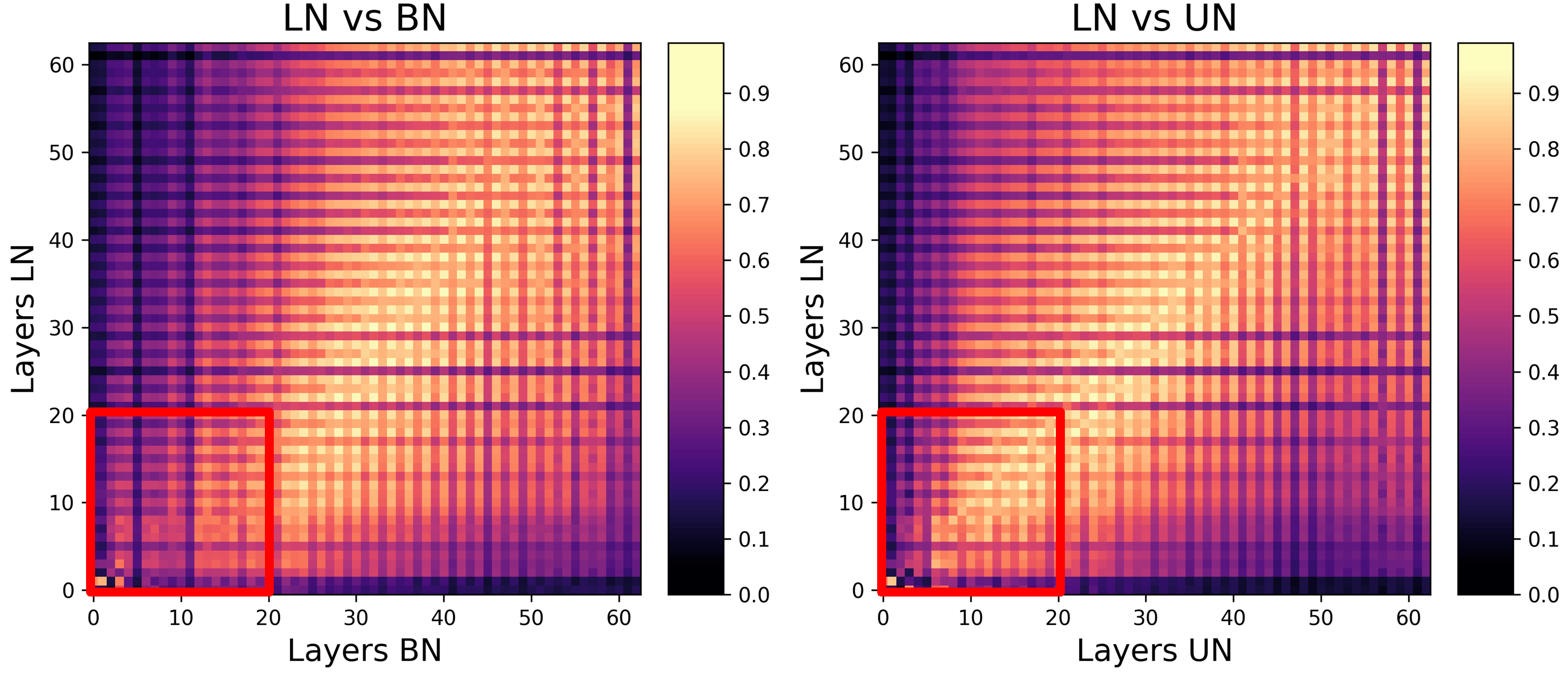}
  \caption{Comparing the similarity of feature maps across T2T-ViT-14 (ImageNet) with shallow layers highlighted in the red box. The similarity is measured with Centered Kernel Alignment (CKA)~\cite{nguyen2020wide} over layers including normalization layers, MHSA, and FFN in T2T-ViT-14. All LN layers in T2T-ViT-14 are simply replaced with BN and UN.}
  \label{fig:4}
\end{figure}

\subsubsection{Feature Similarity} With employing Centered Kernel Alignment (CKA), a widely-used representation similarity metric, we can study the internal representations between different models. As depicted in Figure~\ref{fig:4}, we compare the similarity of intermediate feature maps between models trained with different normalization methods. We mainly focus on the diagonal pixels in the heatmap that indicate the similarity across layers of the same depth. UN shows a more remarkable similarity with LN compared to BN, especially in shallow layers. This result somehow explains the superiority of our method.

\begin{table}
  \centering
  \caption{Inference efficiency comparison between LN and UN in Swin-T. MEM (MB) is the maximum allocated memory during inference. TPUT (Img./Sec.) shows the average throughput calculated over 1000 batches. We set a batch size of 512 for ImageNet and 2 for COCO (with Mask R-CNN).}
  \small
  \label{tab:efficiency}
   \begin{tabular}{c|l|cc|cc}
    \toprule
        Task & Method & MEM & Reduction & TPUT & Speedup \\
     \midrule
        \multirow{2}{*}{ImageNet} 
        & LN        & 9978 &  -         & 1179.5 & -      \\
		& UN        & 8213 &  17.7\%    & 1547.8 & 31.2\% \\
	\midrule
        \multirow{2}{*}{COCO} 
        & LN        & 955 &  -          & 17.8 & -      \\
		& UN        & 897 &  6.1\%      & 22.1 & 24.2\% \\
    \bottomrule
  \end{tabular}
\end{table}

\subsubsection{Efficiency} Transformers are broadly applied to vision tasks and attempt to achieve efficient deployment. LN comes with an additional overhead of computation and memory that results in inefficient inference. Besides, LN can not be supported on many edge devices,~\eg, NXP i.MX Series and TITDA4x. There is still room for Transformers to be further improved to achieve hardware-efficient deployment. In this paper, we focus on improving the normalization layer for Transformers in order to achieve a better trade-off between performance and inference speed. With fusing UN to other linear layers, the division and square root operations are also removed in inference. Experimentally, we test the efficiency on GeForce RTX 3090 with Swin-T as reported in Table~\ref{tab:efficiency}. For classification, we show that when our method is fused with other linear operations, it gains \textbf{about 18\%} memory reduction and \textbf{over 31\%} throughput improvement. For object detection, Mask R-CNN with Swin-T is integrated with other components like FPN and head, whereas LN is solely employed in the backbone. As a result, the increase in speed is limited.

\section{Conclusion}

In this paper, we look at how to deploy Transformers efficiently by replacing LN with an offline method. Previous offline methods suffer from inferior performance and instability due to the large fluctuations and extreme outliers in activation statistics. Based on our analysis, we propose UN that consists of the fluctuation smoothing and the outlier filtration strategies to tackle these challenges. Extensive experiments on NLP and CV tasks demonstrate that our method significantly outperforms previous offline methods. Furthermore, our method provides comparable performance to LN, with a speedup of over 31\% in inference.  We believe our method will be a general component in Transformers for efficient deployment.

\begin{acks}
This work was supported by the National Natural Science Foundation of China (No. U19B2043).
\end{acks}

\bibliographystyle{ACM-Reference-Format}
\bibliography{acmmm22-manuscript}


\clearpage
\appendix

\section{Visualization}

\subsection{Large Fluctuations in Activation Statistics}

\begin{figure}
  \centering
  \includegraphics[width=\linewidth]{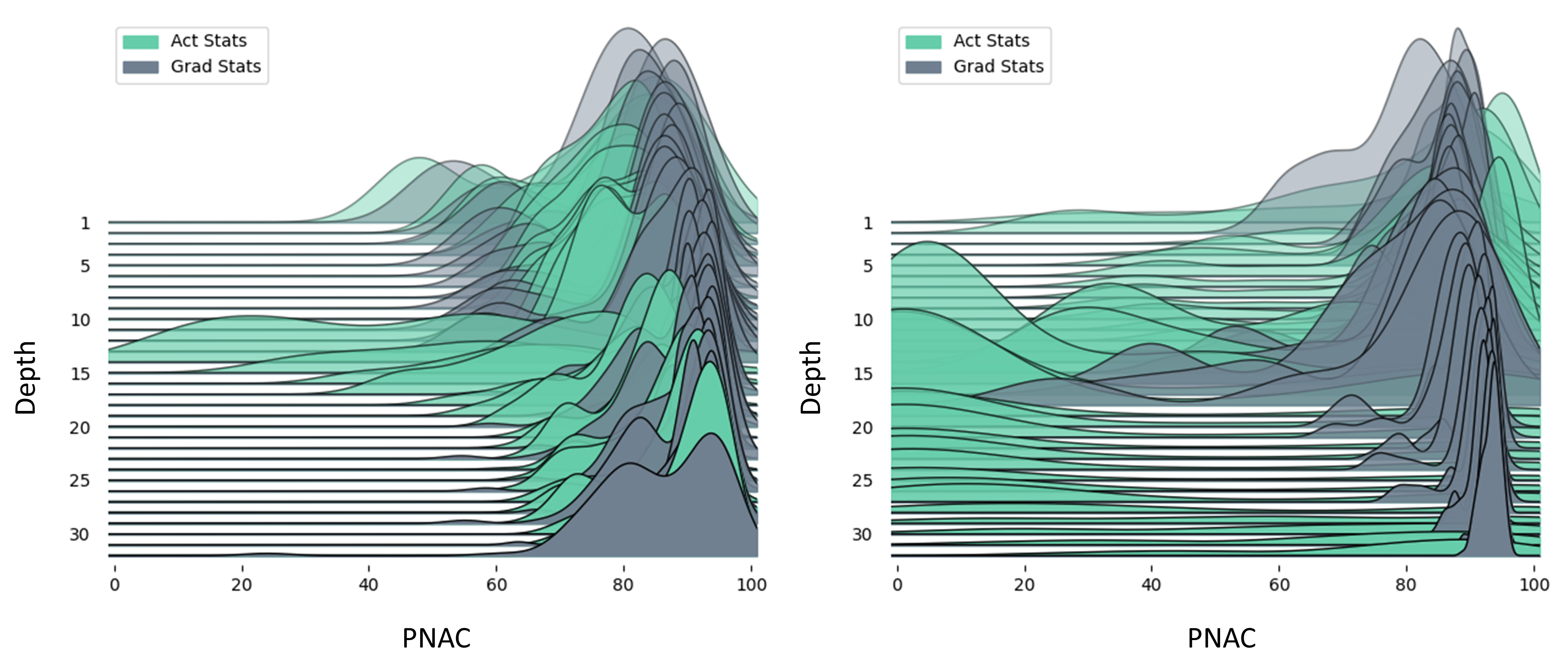}
   \caption{The probability density map of PNAC in different layers (depth) over iterations. Left side: visualization at the beginning of training. Right side: visualization at the end of training. We collect the samples from 150 time steps. The kernel density estimate (KDE) is used for visualization. Note that activation and gradient statistics are plotted in 'aquamarine' and 'gray', respectively.}
  \label{sup-fig:1}
\end{figure}

Figure~\ref{sup-fig:1} showcases the probability density map of PNAC during training that illustrates a general trend of statistics for all normalization layers in Transformer. At the beginning of training, most channels in activation and gradient statistics hold a high PNAC, which indicates mild fluctuations. After training a couple of epochs, there is a huge shift in activation statistics that a lot of channels gain a much lower PNAC, which means large fluctuations emerge in activation statistics. In this paper, a tailored fluctuation smoothing strategy is utilized to gain a better representation of these ever-changing statistics.

\section{Details for Experiments}
\label{setting-details}
\subsection{Setup for Neural Machine Translation} 
We evaluate our method with Transformer on two datasets: (1) IWSLT14 De-En (IWSLT14) contains 0.18M sentence pairs; (2) WMT14 En-Fr (WMT14)~\cite{wu2016google} contains 36M sentence pairs. The setup for prepossessing raw data is the same as~\cite{mehta2020delight}. For IWSLT14, we replicate the training and evaluation strategies in~\cite{liu2020admin}. For WMT14, we follow the training and evaluation setup in~\cite{mehta2020delight} and average the last $5$ checkpoints for the test. Here, all experiments are re-implemented on the code base of Fairseq~\cite{ott2019fairseq} with pre-normalization~\cite{xiong2020layer} setting.

\subsection{Setup for Image Classification}
In this section, we conduct image classification on ImageNet-1K~\cite{deng2009imagenet} with two state-of-the-art Vision Transformers, T2T-ViT-14~\cite{yuan2021tokens} and Swin-T~\cite{liu2021swin}. ImageNet-1K is a widely-used image classification dataset, which contains 1000 categories, 1.28M training samples, and 50K validation samples. Following the setup in ~\cite{yuan2021tokens,liu2021swin}, all models are trained from scratch for 300 epochs with a cropped input size of $224\times224$. We replace all LN layers in original architectures with BN/MABN/PN and our proposed method UN. After pretraining models on ImageNet, we transfer the models to downstream classification datasets, CIFAR10 and CIFAR100, that focus on general object classification. We follow the training recipe in~\cite{yuan2021tokens}. All models are fed with a resized $224\times224$ input and finetuned for 60 epochs.

\subsection{Setup for Object Detection and Instance Segmentation} 
We benchmark our method on COCO~\cite{lin2014microsoft}. Following the standard setup in ~\cite{liu2021swin}, object detection is conducted on Faster R-CNN~\cite{ren2015faster} with FPN~\cite{lin2017feature}. For instance segmentation, we evaluate our method with a common framework Mask R-CNN~\cite{he2017mask}. The setup for training and evaluation are following the original configurations on ~\cite{liu2021swin}, all models are trained with 36 epochs. The input size is $1333\times800$ and the total batch size is set as $16$. The backbone (Swin-T) is initialized with pretrained weights trained on ImageNet-1K. All experiments are re-implemented based on {\tt mmdetection}~\cite{chen2019mmdetection}. 

\section{Extra Results}

\subsection{Ablation Study on Neural Machine Translation}
\label{appendix-nmt}

\begin{table}
  \centering
  \caption{Ablation of different components of UN evaluated on IWSLT14 with Transformer.}
  \label{tab:ablation-study-1}
  \begin{tabular}{cl|c|cc|cc}
    \toprule
      \multirow{2}{*}{Exp} & \multirow{2}{*}{Method}  &  FP &  \multicolumn{2}{c|}{BP} &  \multicolumn{2}{c}{IWSLT14}\\
      \cline{3-7} 
      &         & GM    & AM &$\alpha$ & BLEU & $\Delta$    \\
      \midrule
      1 & UN	&\Checkmark&\Checkmark&\Checkmark& 35.4 & \\
      \midrule
      2 & 		&		&\Checkmark&\Checkmark&34.5	& -0.9  \\
      3 & 		&\Checkmark&		&\Checkmark&35.3	& -0.1\\
      4 & 		&\Checkmark&\Checkmark&		&18.3   & -17.1 \\
      5 & 		&\Checkmark&		&		&FAIL   & /     \\
    \bottomrule
  \end{tabular}
\end{table}

Table~\ref{tab:ablation-study-1}, we ablate UN on IWSLT14 to verify the contribution of the basic components. Since Geometric Mean (GM), Arithmetic Mean(AM), and momentum $\alpha$ are all used in the fluctuation smoothing, we try to remove the basic components one by one, as shown in Exp2-5. Without any moving strategies applied in activation statistics, there is a significant drop on BELU. In backward propagation, we conduct a compound moving strategy for gradient estimation, which consists of arithmetic mean and momentum $\alpha$. The results exhibit that each one of them is important for the final performance. Especially, the momentum $\alpha$ is greatly helpful for stabilizing the training on ISWLT14. In Exp5, once we remove all moving strategies in BP, the model will fail to converge. The result shows the advantage of the fluctuation smoothing in IWSLT14.

\subsection{Effect on $\alpha$}

\begin{table}
  \centering
  \small
  \caption{The effect of $\alpha$ in UN is evaluated on COCO val2017 (AP) and IWSLT14 (BELU).}
  \label{tab:alpha-cv-nlp}
  \begin{threeparttable}
  \begin{tabular}{l|cccc}
    \toprule
    $\alpha$ & 0.9 & 0.8 & 0.7 & 0.6\\
    \midrule
    $\text{AP}^{\text{Box}}$ & 42.8 & 42.6 & 42.7 & 42.8 \\
    $\text{AP}^{\text{Mask}}$ & 39.2 & 39.1 & 39.2 & 39.4  \\
    \midrule
    BELU & 35.4 & 35.4 & 35.3 & FAIL  \\
    \bottomrule
  \end{tabular}
  \end{threeparttable}
\end{table}

In UN, we leverage a momentum $\alpha$ for both approximating inference statistics in forward propagation and estimating gradients in backward propagation. We investigate the effect of $\alpha$ as shown in Table~\ref{tab:alpha-cv-nlp}. By tuning $\alpha$ within a large range of $\{0.6, 0.7, 0.8, 0.9\}$, we find the models still converge stably on COCO with close performance. Choosing $\alpha$ from $\{0.7, 0.8, 0.9\}$ is also fine with IWSLT14. Once we set it with a small ratio, such as $\alpha=0.6$, the results are task-specific in that the training collapsed on IWSLT14. To some degree, that’s similar to what we show in Table~\ref{tab:ablation-study-1} (remove the momentum in BP). As a result, we believe that $\alpha=0.9$ would be a good choice for various tasks while also allowing for a fair comparison with other methods.

\end{document}